\documentclass[arxiv]{imsart}

\RequirePackage[OT1]{fontenc}
\RequirePackage{amsthm,amsmath}
\RequirePackage[numbers]{natbib}
\RequirePackage[colorlinks,citecolor=blue,urlcolor=blue]{hyperref}

\usepackage{hyperref}       
\usepackage{url}            
\usepackage{booktabs}       
\usepackage{amsfonts}       
\usepackage{nicefrac}       
\usepackage{amsmath}
\usepackage{color}
\usepackage{amsthm}
\usepackage{graphicx}
\usepackage{caption}
\usepackage{subcaption}

\newtheorem{theorem}{Theorem}

\graphicspath{{figures/}}

\pubyear{2017}
\volume{0}
\issue{}
\firstpage{}
\lastpage{}

\startlocaldefs
\numberwithin{equation}{section}
\theoremstyle{plain}

\endlocaldefs

\begin{document}

\begin{frontmatter}
	\title{Poisson Intensity Estimation with Reproducing Kernels\thanksref{T1}}
\runtitle{Poisson Intensity Estimation with Reproducing Kernels}
	\thankstext{T1}{This work was supported by ERC (FP7/617071) and EPSRC (EP/K009362/1).}

\begin{aug}
\author{\fnms{Seth} \snm{Flaxman}\ead[label=e1]{flaxman@stats.ox.ac.uk},}
\author{\fnms{Yee Whye} \snm{Teh}\ead[label=e2]{y.w.teh@stats.ox.ac.uk}}
\and
\author{\fnms{Dino} \snm{Sejdinovic}\ead[label=e3]{dino.sejdinovic@stats.ox.ac.uk}}

\address{Department of Statistics\\
24-29 St Giles'\\
Oxford OX1 3LB\\
	United Kingdom \\
	\printead{e1,e2,e3}}

\runauthor{Flaxman et al.}

\affiliation{University of Oxford}

\end{aug}

\begin{abstract}
	Despite the fundamental nature of the inhomogeneous Poisson process in the theory and application of stochastic processes, and its attractive generalizations (e.g.~Cox process), few tractable nonparametric modeling approaches of intensity functions exist, especially when observed points lie in a high-dimensional space. In this paper we develop a new, computationally tractable Reproducing Kernel Hilbert Space (RKHS) formulation for the inhomogeneous Poisson process. We model the square root of the intensity as an RKHS function. 
	Whereas RKHS models used in supervised learning rely on the so-called representer theorem, the form of the inhomogeneous Poisson process likelihood means that the representer theorem does not apply. 
	However, we prove that the representer theorem does hold in an appropriately transformed RKHS, guaranteeing that the optimization of the penalized likelihood can be cast as a tractable finite-dimensional problem.  The resulting approach is simple to implement, and readily scales to high dimensions and large-scale datasets.
\end{abstract}

\begin{keyword}[class=MSC]
\kwd[Primary ]{62G05}
\kwd{60G55}
\kwd{46E22}
\end{keyword}

\begin{keyword}
\kwd{nonparametric statistics, computational statistics, spatial statistics, intensity estimation, reproducing kernel Hilbert space, inhomogeneous Poisson processes}
\end{keyword}
\tableofcontents
\end{frontmatter}

\section{Introduction}
Poisson processes are ubiquitous in statistical science, with a long history
spanning both theory (e.g.~\cite{kingman1993poisson}) and
applications (e.g.~\cite{diggle2013spatial}), especially
in the spatial statistics and time series literature. Despite their
ubiquity, fundamental questions in their application to
real datasets remain open. Namely, scalable nonparametric models for intensity functions of inhomogeneous Poisson processes are not well understood, especially in multiple dimensions since the standard approaches, based on kernel smoothing, are akin to density estimation and hence scale poorly with dimension. In this contribution, we propose a step towards such scalable nonparametric modeling and introduce a new Reproducing Kernel
Hilbert Space (RKHS) formulation for inhomogeneous Poisson process modeling, which is
based on the Empirical Risk Minimization (ERM) framework. We model the square root of the intensity as an RKHS function and consider a risk functional given by a penalized version of the inhomogeneous Poisson process
likelihood. However, standard representer theorem arguments do not apply directly due to the form of the likelihood. Namely, the fundamental difference arises since the observation that {\em no points} occur in some region is just as important as the locations of the points
that do occur. Thus, the likelihood depends not only on the evaluations of the intensity at the observed points, but also on its integral across
the domain of interest. As we will see, this difficulty can be overcome by appropriately adjusting the RKHS under consideration. We prove a version of the representer theorem in this adjusted RKHS, which coincides with the original RKHS as a space of functions but has a different inner product structure.
This allows us to cast the estimation problem as an optimization over a finite-dimensional subspace of the adjusted RKHS.
The derived method is demonstrated to give better performance than a na\"ive unadjusted RKHS method which resorts to an
optimization over a subspace without representer theorem guarantees. We describe cases where adjusted RKHS can be described with explicit
Mercer expansions and propose numerical approximations where Mercer expansions are not available. We observe strong performance of the proposed method on a variety of synthetic, environmental, crime and bioinformatics data. 

\section{Background and related work}
\subsection{Poisson process}
We briefly state relevant definitions for point processes over domains $S \subset \mathbb{R}^D$, following \cite{cressie2011}.
For Lebesgue
measurable subsets $T \subset S$, $N(T)$ denotes the number of events
in $T \subset S$. $N(\cdot)$ is a stochastic process characterizing
the point process. Our focus is on providing a nonparametric estimator for
the first-order intensity of a point process, which is defined as:
\begin{equation}
	\lambda(s) = \lim_{|ds|\rightarrow 0} \mathbb{E}[N(ds))]/|ds|.
\end{equation}
The inhomogeneous Poisson process is driven solely by the intensity
function $\lambda(\cdot)$:
\begin{equation}
	N(T) \sim \mbox{Poisson}(\int_T \lambda(x) dx).
\end{equation}
In the homogeneous Poisson process, $\lambda(x) = \lambda$ is constant, so the number of points in any region $T$ simply depends on the volume of $T$, which we denote $|T|$:
\begin{equation}
	N(T) \sim \mbox{Poisson}(\lambda |T|).
\end{equation}
For a given intensity function $\lambda(\cdot)$, 
the likelihood of a set of $N=N(S)$ points $x_1, \ldots, x_N$ observed over a domain $S$ is given by:
\begin{equation}
	\mathcal{L}(x_1,\ldots,x_N | \lambda(\cdot)) = \prod_{i=1}^N \lambda(x_i) e^{-\int_S \lambda(x) dx}
		\label{eq:inhomogeneous-poisson}
\end{equation}

\subsection{Reproducing Kernel Hilbert Spaces}
Given a non-empty domain $S$ and a positive definite kernel function $k:S\times S\to\mathbb R$, there exists a unique reproducing kernel Hilbert space (RKHS) $\mathcal H_k$. An RKHS is a space of functions $f:S \to\mathbb R$, in which evaluation is a continuous functional, meaning it can be represented by an inner product $f(x)=\langle f,k(x,\cdot) \rangle_{\mathcal H_k}$ for all $f\in\mathcal H_k, x\in S$ (this is known as the reproducing property), cf.~\citet{BerTho04}. While $\mathcal H_k$ is in most interesting cases an infinite-dimensional space of functions, due to the classical representer theorem \citep{KIMELDORF1971}, \citep[Section 4.2]{scholkopf2002learning}, optimization over $\mathcal H_k$ is typically a tractable finite-dimensional problem. In particular, if we have a set of $N$ observations $x_1,\ldots,x_N$, $x_i\in S$ and consider the problem:
\begin{equation}
 \min_{f\in\mathcal H_k} \left\{ R\left(f(x_1),\ldots,f(x_N)\right) + \Omega\left( \|f \|_{\mathcal{H}_k}\right)\right\}.
 \label{eq:general_objective}
\end{equation}
where $R\left(f(x_1),\ldots,f(x_N)\right)$ depends on $f$ through its evaluations on the set of observations only, and $\Omega$ is a non-decreasing function of the RKHS norm of $f$, there exists a solution to Eq.~\eqref{eq:general_objective} of the form $f^*(\cdot) = \sum_{i=1}^N \alpha_i k(x_i,\cdot)$, and the optimization can thus be cast in terms of the so-called \emph{dual coefficients} $\alpha\in\mathbb R^N$. This formulation is widely used in the framework of regularized Empirical Risk Minimization (ERM) for supervised learning, where $R\left(f(x_1),\ldots,f(x_N)\right)=\frac{1}{N}\sum_{i=1}^N L(f(x_i),y_i)$ is the empirical risk corresponding to a loss function $L$, e.g.~squared loss for regression, logistic or hinge loss for classification.

If domain $S$ is compact and kernel $k$ is continuous, one can assign to $k$ its integral kernel operator $T_k:L_2(S)\to L_2(S)$, given by $T_k g=\int_S k(x,\cdot)g(x)dx$, which is positive, self-adjoint and compact. There thus exists an orthonormal set of eigenfunctions $\{e_j\}_{j=1}^\infty$ of $T_k$ and the corresponding eigenvalues $\{\eta_j\}_{j=1}^\infty$, with $\eta_j \rightarrow 0$ as $j \rightarrow \infty$.
This spectral decomposition of $T_k$ leads to Mercer's representation of kernel function $k$ \cite[Section 2.2]{scholkopf2002learning}:
\begin{equation}
	k(x,x') = \sum_{j=1}^\infty \eta_j e_j(x) e_j(x'), \qquad x,x'\in S
	\label{eq:mercer}
\end{equation}
with uniform convergence on $S\times S$. Any function $f\in\mathcal H_k$ can then be written as $f=\sum_j b_j e_j$ where $\|f \|_{\mathcal{H}_k}^2=\sum_j b_j^2/\eta_j<\infty$.

Note that above we have focused on Mercer expansion with respect to the
Lebesgue measure, but other base measures are also often considered in
literature, e.g.~\cite[section 4.3.1]{rasmussen2006gaussian}.

\subsection{Related work}
The classic approach to nonparametric intensity estimation
is based on smoothing kernels \citep{ramlau-hansen1983,diggle1985kernel}
and has a form closely related to the kernel density estimator:
\begin{equation}
\hat\lambda(x) = \sum_{i=1}^N \kappa(x_i-x)
\end{equation}
where $\kappa$ is a smoothing kernel (related to but distinct from the RKHS kernels described in the previous section), that is, any bounded function integrating
to $1$.  Early work in this area focused on edge-corrections and methods for choosing
the bandwidth \citep{diggle1985kernel,berman1989estimating,brooks1991asymptotic}. Connections with RKHS
have been considered by, for example, \citet{bartoszynski1981some} who use a maximum penalized
likelihood approach based on Hilbert spaces to estimate the intensity of a Poisson process.
There is long literature on maximum penalized likelihood approaches to density
estimation, which also contain interesting connections with RKHS, e.g.~\cite{silverman1982}.

Much recent work on estimating intensities for point processes has focused on
Bayesian approaches to modeling Cox processes. The log Gaussian Cox Process
\citep{moller1998log} and related parameterizations of Cox (doubly stochastic)
Poisson processes in terms of Gaussian processes have been proposed, along with
Monte Carlo \citep{adams2009tractable,diggle2013spatial,teh2011gaussian},
Laplace approximation \citep{illian2012toolbox,cunningham2008fast,flaxman2015fast}
and variational \citep{lloyd2015variational,yves2015scalable} inference schemes.

Another related body of literature concerns Cox processes with intensities parameterized as the sum of squares
of $k$ Gaussian processes, called the permanent process \cite{mccullagh2006permanental}. Interestingly, calculating
the density of the permanent process relies on a kernel transformation similar to the one we propose below. Unlike
these approaches, however, we are not working in a doubly stochastic (Cox process) framework; rather we are taking
a penalized maximum likelihood estimation perspective to estimate the intensity of an inhomogeneous Poisson process.
As future work, it would be worthwhile to explore deeper connections between our formulation and the permanent process, 
e.g.~by considering an RKHS formulation of Cox processes or by considering an inhomogeneous Poisson process whose intensity
is the sum of squares of functions in an RKHS.

\section{Proposed method and kernel transformation}
Let $S$ be a compact domain of observations.
Let $k:S\times S \to \mathbb R$ be a continuous positive definite kernel, and $\mathcal{H}_k$ its corresponding RKHS of functions $f:S\to \mathbb R$. We model the intensity function $\lambda(\cdot)$ of an inhomogeneous Poisson process
as:
\begin{equation}
	\lambda(x) := a f^2(x),\quad x\in S,
	\label{eq:parametrization}
\end{equation}
which is parameterized by $f \in \mathcal{H}_k$ and an additional scale parameter $a>0$. 
The flexibility of choosing $k$ means that we can encode structural assumptions of our domain,
e.g.~periodicity in time or periodic boundary conditions (see Section \ref{sec:sobolev}).
Note that we have squared $f$ to ensure that the intensity
is non-negative on $S$, a pragmatic choice that has previously appeared in the literature (e.g.~\cite{lloyd2015variational}).
While we lose identifiability (since $f$ and $-f$ are equivalent), as shown below we end up with a finite dimensional, and thus tractable, optimization problem.

The rationale for including $a$ is
that it allows us to decouple the overall scale and units of the intensity (e.g.~number of points per hour versus number of points per year) from the
penalty on the complexity of $f$ which arises from the classical regularized Empirical Risk Minimization framework (and which should depend only on how complex, i.e.~``wiggly'' $f$ is).

We use the inhomogeneous Poisson process likelihood from Eq.~\eqref{eq:inhomogeneous-poisson} to write the log-likelihood of a Poisson process
corresponding to the observations $\{x_1, \ldots, x_N\}$, for $x_i \in S$, and intensity $\lambda(\cdot)$:
\begin{equation}
\ell(x_1, \ldots, x_N | \lambda) = \sum_{i=1}^N \log(\lambda(x_i)) - \int_{S} \lambda(x) dx.
\label{eq:log-likelihood}
\end{equation}
We will consider the problem of minimization of the penalized negative log likelihood, where the regularization term corresponds to the squared Hilbert space norm of $f$ in parametrization Eq.~\eqref{eq:parametrization}:
\begin{equation}
 \min_{f\in\mathcal H_k} \left\{-\sum_{i=1}^N \log(af^2(x_i)) + a\int_{S} f^2(x) dx + \gamma \|f \|_{\mathcal{H}_k}^2\right\}.
 \label{eq:objective}
\end{equation}
This objective is akin to a classical regularized empirical risk minimization framework over RKHS: there is a term that depends on evaluations of $f$ at the observed points $x_1,\ldots,x_N$ as well as a term corresponding to the RKHS norm. However, the representer theorem does not apply directly to Eq.~\eqref{eq:objective}: since there is also a term given by the $L_2$-norm of $f$, there is no guarantee that there is a solution of Eq.~\eqref{eq:objective} that lies in $\text{span}\{k(x_i,\cdot)\}_{i=1}^{N}$. We will show that Eq.~\eqref{eq:objective} fortunately still reduces to a finite-dimensional optimization problem corresponding to a different kernel function $\tilde k$ which we define below.

Using the Mercer expansion of $k$ in Eq.~\eqref{eq:mercer}, we can write the objective Eq.~\eqref{eq:objective} as follows:
\begin{align}
	J\left[f\right] & = -\sum_{i=1}^N \log(a f^2(x_i)) + a \|f\|^2_{L_2(S)} + \gamma \|f \|_{\mathcal{H}_k}^2 \\
   & = -\sum_{i=1}^N \log(a f^2(x_i)) + a \sum_{j=1}^{\infty} b_j^2  + \gamma \sum_{j=1}^{\infty} \frac{b_j^2}{\eta_j}.
\end{align}
The last two terms can now be merged together, giving
\begin{align*}
	& a \sum_{j=1}^{\infty} b_j^2  + \gamma \sum_{j=1}^{\infty} \frac{b_j^2}{\eta_j}
	 = \sum_{j=1}^{\infty} b_j^2 \frac{a \eta_j  + \gamma}{\eta_j}
	 = \sum_{j=1}^{\infty} \frac{b_j^2}{\eta_j (a \eta_j  + \gamma)^{-1}}.
\end{align*}
Now, if we define kernel $\tilde k$ to be the kernel corresponding to the integral operator $T_{\tilde k}:=T_k(a T_k + \gamma I)^{-1}$, i.e., $\tilde k$ is given by:
\begin{equation*}
	\tilde k(x,x') = \sum_{j=1}^\infty \frac{\eta_j}{a\eta_j+\gamma} e_j(x) e_j(x'), \qquad x,x'\in S,
\end{equation*}
we see that:
\begin{equation}
	J\left[f\right] = -\sum_{i=1}^N \log(a f^2(x_i)) + \|f \|_{\mathcal{H}_{\tilde k}}^2.
	\label{eq:Jtildenorm}
\end{equation}

Thus, we have merged the two squared norm terms into a squared norm in a new RKHS. We note that a similar idea has previously been used to modify Gaussian process priors in \cite{Csato2001}, albeit in a different context, and that a similar transformation appears in the expression
for the distribution of a permanent process \cite{mccullagh2006permanental}. We are now ready to state the representer theorem in terms of kernel $\tilde k$.
\begin{theorem}{There exists a solution of 
    Eq.~\eqref{eq:objective}
    for observations $x_1, \ldots, x_N$, which takes the form $f^*(\cdot) = \sum_{i=1}^N \alpha_i \tilde k(x_i,\cdot)$.}
	\end{theorem}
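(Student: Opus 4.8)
The plan is to recognize that the kernel transformation already carried out in Eq.~\eqref{eq:Jtildenorm} has reduced the problem to exactly the template of the classical representer theorem, Eq.~\eqref{eq:general_objective}, only now for the kernel $\tilde k$. Writing $R(f(x_1),\ldots,f(x_N)) = -\sum_{i=1}^N \log(af^2(x_i))$ and $\Omega(t)=t^2$, the functional $J$ depends on $f$ only through its evaluations at $x_1,\ldots,x_N$ together with a non-decreasing function of $\|f\|_{\mathcal H_{\tilde k}}$. First I would check that $\tilde k$ is a genuine positive definite kernel with its own RKHS: its eigenvalues $\tilde\eta_j = \eta_j/(a\eta_j+\gamma)$ are positive and summable, since $0<\tilde\eta_j<\eta_j/\gamma$ and $\sum_j\eta_j<\infty$ for a continuous kernel on compact $S$, so $T_{\tilde k}=T_k(aT_k+\gamma I)^{-1}$ is a well-defined positive, self-adjoint, compact operator and $\mathcal H_{\tilde k}$ is a bona fide RKHS.

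The step I regard as the crux is verifying that $\mathcal H_{\tilde k}$ and $\mathcal H_k$ coincide as sets of functions, so that the point evaluations appearing in $R$ are the same object in both spaces and minimizing Eq.~\eqref{eq:objective} over $\mathcal H_k$ is literally the same problem as minimizing Eq.~\eqref{eq:Jtildenorm} over $\mathcal H_{\tilde k}$. Expanding $f=\sum_j b_j e_j$, one has $1/\tilde\eta_j = a + \gamma/\eta_j$, hence $\|f\|_{\mathcal H_{\tilde k}}^2 = a\|f\|_{L_2(S)}^2 + \gamma\|f\|_{\mathcal H_k}^2$, which is precisely the merging identity derived in the excerpt. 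Since $a,\gamma>0$, finiteness of the left-hand side is equivalent to finiteness of both $\|f\|_{L_2(S)}$ and $\|f\|_{\mathcal H_k}$; and because $\eta_j\to 0$ forces $1/\eta_j\ge 1$ for all large $j$, finiteness of $\|f\|_{\mathcal H_k}$ already implies $f\in L_2(S)$. Thus $\mathcal H_{\tilde k}=\mathcal H_k$ as function spaces, only the inner product differs, and the reproducing property $f(x_i)=\langle f,\tilde k(x_i,\cdot)\rangle_{\mathcal H_{\tilde k}}$ delivers the same evaluations used in $R$.

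With the two spaces identified, the representer argument itself is the standard orthogonal-projection one. Given any $f\in\mathcal H_{\tilde k}$, I would decompose $f = f_\parallel + f_\perp$, where $f_\parallel$ is the projection of $f$ onto $\text{span}\{\tilde k(x_i,\cdot)\}_{i=1}^N$ and $f_\perp$ is $\mathcal H_{\tilde k}$-orthogonal to every $\tilde k(x_i,\cdot)$. The reproducing property then gives $f(x_i)=\langle f,\tilde k(x_i,\cdot)\rangle_{\mathcal H_{\tilde k}} = f_\parallel(x_i)$, so the data term $R$ is unchanged by discarding $f_\perp$, while $\|f\|_{\mathcal H_{\tilde k}}^2 = \|f_\parallel\|_{\mathcal H_{\tilde k}}^2 + \|f_\perp\|_{\mathcal H_{\tilde k}}^2 \ge \|f_\parallel\|_{\mathcal H_{\tilde k}}^2$. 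Hence $J[f_\parallel]\le J[f]$, and it suffices to optimize over the finite-dimensional span, i.e.~over $\alpha\in\mathbb R^N$ in $f=\sum_i\alpha_i\tilde k(x_i,\cdot)$.

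The remaining obstacle, and the point where this differs from a textbook representer theorem, is existence of the minimizer: the loss $-\log(af^2(x_i))$ diverges to $+\infty$ whenever some $f(x_i)=0$, so $J$ is not finite everywhere and one cannot simply invoke continuity on a closed ball. I would close this with a coercivity estimate. The reproducing property bounds $f(x_i)^2\le \tilde k(x_i,x_i)\,\|f\|_{\mathcal H_{\tilde k}}^2$, so each log term is bounded below by $-\log(a\,\tilde k(x_i,x_i)) - 2\log\|f\|_{\mathcal H_{\tilde k}}$; the data term therefore grows at most logarithmically in $\|f\|_{\mathcal H_{\tilde k}}$, which is dominated by the quadratic penalty $\|f\|_{\mathcal H_{\tilde k}}^2$. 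Consequently $J$ is coercive and lower semicontinuous on the finite-dimensional span, with sublevel sets bounded away from the bad set $\{f(x_i)=0\}$, so a minimizer is attained in the interior. Combining this with the projection step yields a minimizer of the asserted form $f^*=\sum_{i=1}^N\alpha_i\tilde k(x_i,\cdot)$.
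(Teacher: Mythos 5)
Your proposal is correct, and its core is the same two-step argument the paper uses: first establish that $\mathcal H_k$ and $\mathcal H_{\tilde k}$ contain exactly the same functions (via $1/\tilde\eta_j = a + \gamma/\eta_j$, so that minimizing Eq.~\eqref{eq:objective} over $\mathcal H_k$ is literally the problem of minimizing Eq.~\eqref{eq:Jtildenorm} over $\mathcal H_{\tilde k}$), and then run the standard orthogonal-decomposition argument in the $\tilde k$ inner product, noting that the log-loss depends only on the evaluations $f(x_i)$ while discarding the orthogonal component can only shrink $\|f\|_{\mathcal H_{\tilde k}}$. Where you genuinely go beyond the paper is the final paragraph. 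The paper's proof shows that projecting onto $\mathrm{span}\{\tilde k(x_i,\cdot)\}$ never increases $J$, and then concludes that the projected function ``is the minimizer''---tacitly assuming that a minimizer exists at all, even though existence is part of the theorem's claim and is not automatic here (the loss is $+\infty$ on the set where some $f(x_i)=0$, so one cannot cite continuity on a closed ball). Your coercivity estimate, $f(x_i)^2 \le \tilde k(x_i,x_i)\,\|f\|_{\mathcal H_{\tilde k}}^2$, which makes the data term decay at worst logarithmically against the quadratic penalty, closes this gap: on the finite-dimensional span $J$ is coercive and lower semicontinuous with sublevel sets bounded away from the singular set, so the infimum is attained (granting the mild nondegeneracy that $J$ is finite somewhere on the span, e.g.\ when $\tilde K$ has no identically zero row). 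Likewise, your preliminary check that $\tilde k$ is a bona fide kernel (positive, summable eigenvalues, hence a well-defined RKHS) is a detail the paper takes for granted. So your route buys a complete existence proof, at the cost of an extra page of analysis; the paper's version is shorter but, read strictly, proves only that \emph{if} a minimizer exists it can be taken in the span.
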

	\begin{proof}
	 Since $\sum_j \frac{b_j^2}{\eta_j}<\infty$ if and only if $\sum_j \frac{b_j^2}{\eta_j (a \eta_j  + \gamma)^{-1}}<\infty$, i.e. $f \in \mathcal{H}_k \iff f \in \mathcal{H}_{\tilde k}$, we have that the two spaces correspond to exactly the same set of functions. Optimization over $\mathcal H_k$ is therefore equivalent to optimization over $\mathcal H_{\tilde k}$. 
		
		The proof now follows by applying the classical representer theorem in $\tilde k$ to the representation of the objective function in Eq.~\eqref{eq:Jtildenorm}.  We decompose $f \in \mathcal{H}_{\tilde k}$
as the sum of two functions:
\begin{equation} 
	f(\cdot) = \sum_{j=1}^N \alpha_j \tilde k(x_j,\cdot) + v
\end{equation}
		where $v$ is orthogonal in $\mathcal{H}_{\tilde k}$ to the span of $\{\tilde k(x_j,\cdot)\}_j$. We prove that the first term in the objective $J[f]$ given in Eq.~\eqref{eq:Jtildenorm}, $-\sum_{i=1}^N \log(af^2(x_i))$, is independent of $v$. It depends
on $f$ only through the evaluations $f(x_i)$ for all $i$. Using the reproducing property we have:
		\begin{equation} f(x_i) = \langle f, \tilde k(x_i,\cdot) \rangle_{\mathcal{H}_{\tilde k}} = \sum_j \alpha_j \tilde k(x_j,x_i) + \langle v, \tilde k(x_i,\cdot) \rangle_{\mathcal{H}_{\tilde k}} = \sum_j \alpha_j \tilde k(x_j,x_i)
\end{equation}
where the last step is by orthogonality.
Next we substitute into the regularization term:
\begin{align}
	& \gamma \| \sum_j \alpha_j \tilde k(x_j,\cdot) + v \|_{\mathcal{H}_{\tilde k}}^2 
	 = \gamma \| \sum_j \alpha_j \tilde k(x_j,\cdot)\|^2_{\mathcal{H}_{\tilde k}} + \| v \|_{\mathcal{H}_{\tilde k}}^2  
	 \geq \gamma \| \sum_j \alpha_j \tilde k(x_j,\cdot)\|^2_{\mathcal{H}_{\tilde k}}. 
\end{align}
Thus, the choice of $v$ has no effect on the first term in $J[f]$ and a non-zero $v$ can only increase the second term $\|f \|_{\mathcal{H}_{\tilde k}}^2$, so we conclude that $v=0$ and that
$f^*=\sum_{j=1}^N \alpha_j \tilde k(x_j,\cdot)$ is the minimizer.
	\end{proof}
{\bf Remark 1.} The notions of the inner product in $\mathcal{H}_k$ and $\mathcal{H}_{\tilde k}$ are different and thus in general $\text{span}\{k(x_i,\cdot)\}\neq\text{span}\{\tilde k(x_i,\cdot)\}$.

{\bf Remark 2.} Notice that unlike in a standard ERM setting, $\gamma = 0$ does not recover the unpenalized risk, because $\gamma$ appears in $\tilde k$.
Notice further that the overall scale parameter $a$ also appears in $\tilde k$. This is important in practice, because it allows us to decouple the
scale of the intensity (which is controlled by $a$) from its complexity (which is controlled by $\gamma$).

{\bf Illustration.} The eigenspectrum of $\tilde k$ where $k$ is a squared exponential kernel is shown in Figure \ref{fig:ktilde} for various settings of $a$ and $\gamma$. Reminiscent of
spectral filtering studied by \citet{Muandet2014_spectral}, in the top plot we see that depending on the settings of $a$ and $\gamma$, eigenvalues of $\tilde k$ are shrunk or inflated as compared to $k(x,x')$ which is shown in black.
In the bottom plot, the values of $k(0,x)$ are shown for the same set of kernels.

\begin{figure}
  \includegraphics[width=.4\paperwidth]{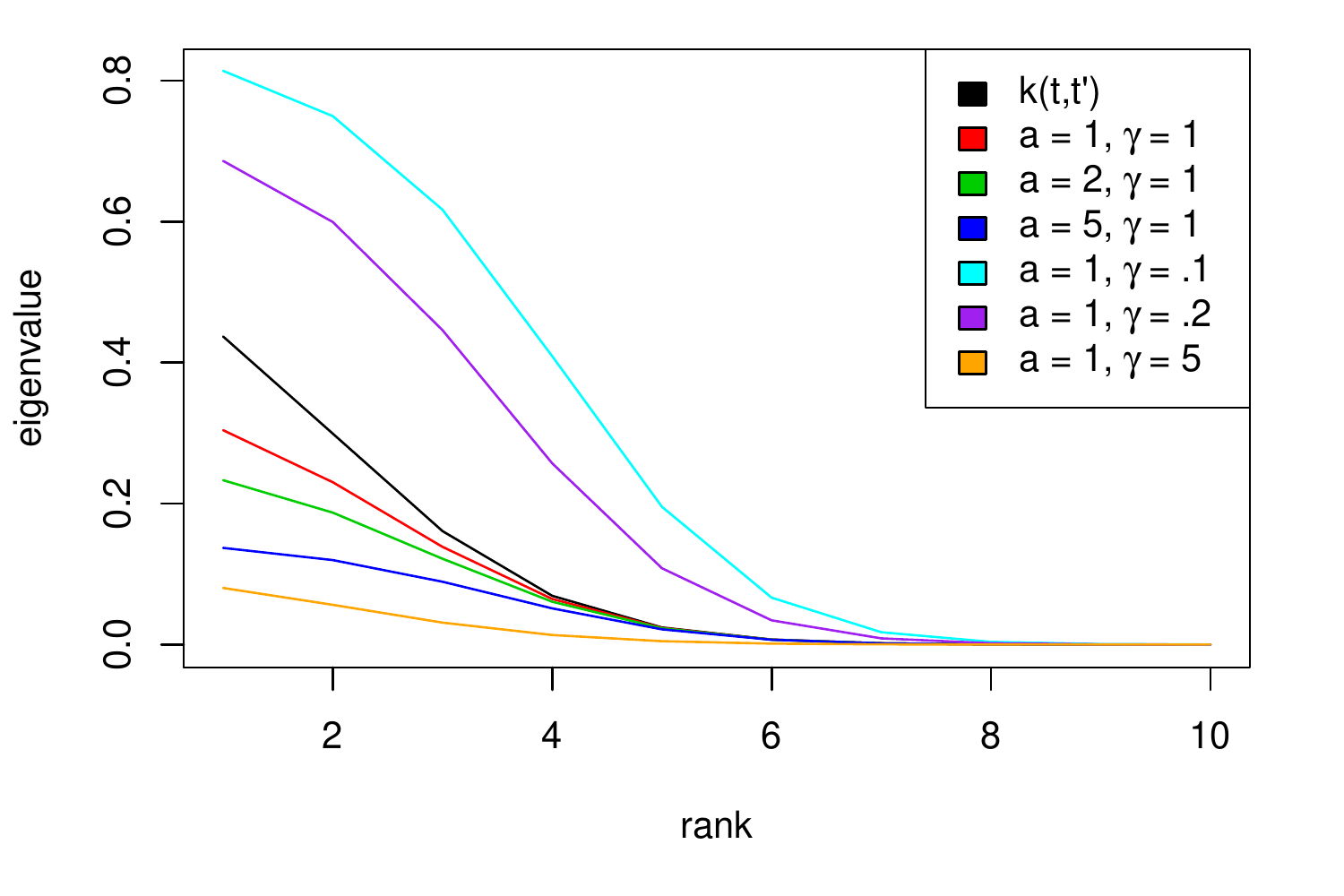}

  \includegraphics[width=.4\paperwidth]{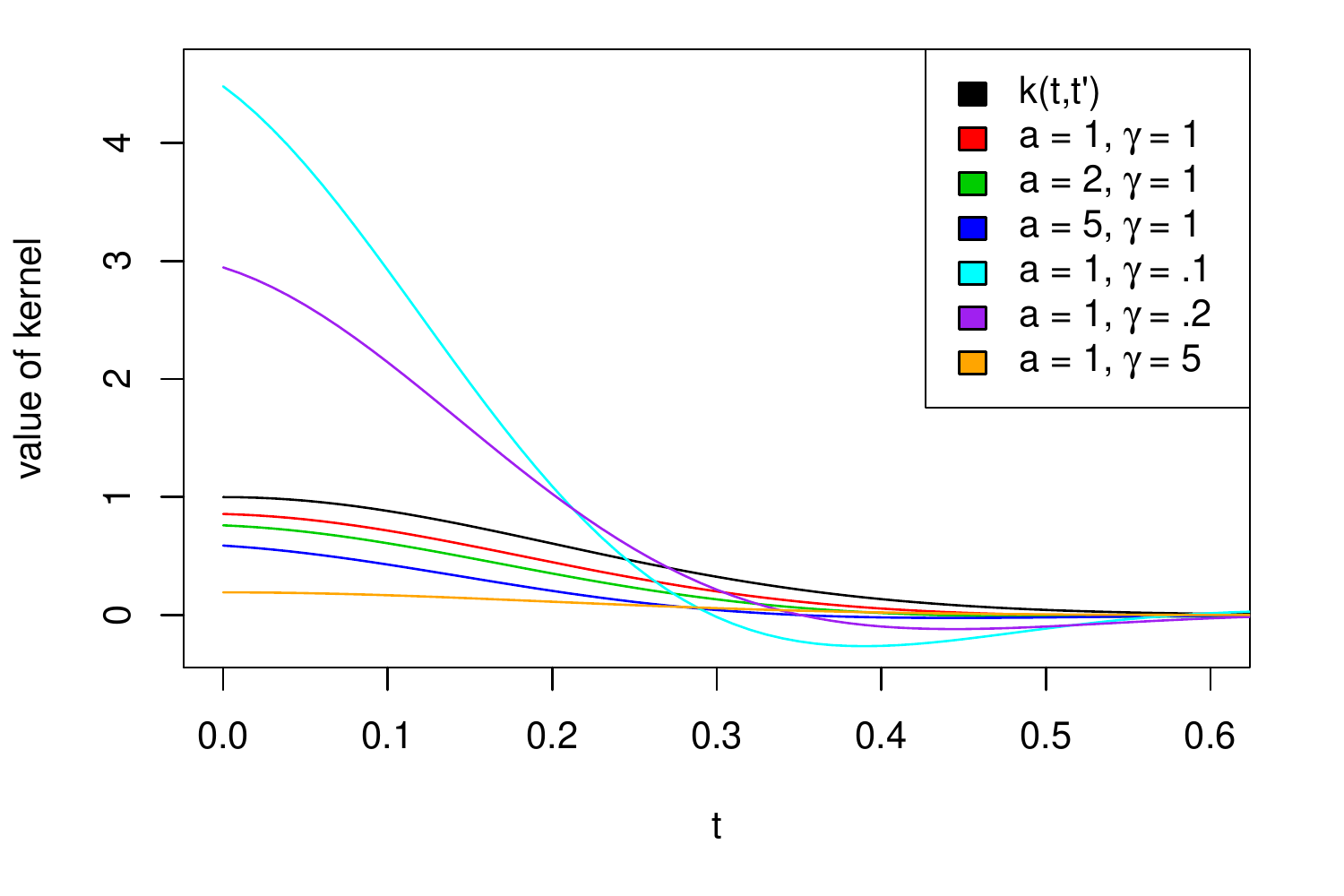}
	\caption{Eigenspectrum of $\tilde k$ (top) and values of $\tilde k$ (bottom) for various settings of $a$ and $\gamma$.}
	\label{fig:ktilde}
\end{figure}

\section{Computation of $\tilde k$}
In this section, we consider first the case in which an explicit Mercer expansion is known,
and then we consider the more commonly encountered situation in which we only have access to
the parametric form of the kernel $k(x,x')$, so we must approximate $\tilde k$. We
show experimentally that our approximation is very accurate by considering the Sobolev kernel,
which can be expressed in both ways.

\subsection{Explicit Mercer Expansion}
We start by assuming that we have a kernel $k$ with an explicit Mercer expansion with respect to a base
measure of interest (usually the Lebesgue measure on $S$), so we have
eigenvectors $\{e_j(x)\}_{j\in J}$ and eigenvalues $\{\eta_j\}_{j\in J}$:
\begin{equation}
	k(x,x') = \sum_{j\in J} \eta_j e_j(x) e_j(x'),
\end{equation}
with an at most countable index set $J$.
Given $a$ and $\gamma$ we can calculate:
\begin{equation}
\label{eq:adjusted}
	\tilde k(x,x') = \sum_{j\in J} \frac{\eta_j}{a \eta_j + \gamma} e_j(x) e_j(x')
\end{equation}
up to a desired precision as informed by the spectral decay in $\{\eta_j\}_{j\in J}$. 
Below we consider kernels for which explicit Mercer expansions are known: a kernel on the Sobolev space $[0,1]$ with a periodic boundary condition, the squared exponential kernel, and the Brownian bridge kernel.
We also show how our formulation can be extended to multiple dimensions using a tensor product formulation. 
Although not practical for large datasets, the Mercer
expansions given below, summing terms up to $j = 50$ (for which the error is less than $10^{-5}$), can be used to 
evaluate approximations for the cases in which Mercer expansions are not available.

\subsubsection{Sobolev space on $[0,1]$ with a periodic boundary condition}
\label{sec:sobolev}
We consider a kernel on the Sobolev space on $[0,1]$ with a periodic boundary condition, proposed by \citet[chapter 2]{wahba1990spline}
and recently used in \citet{bach2015equivalence}.
The kernel is given by:
\begin{eqnarray*}
k(x,y) & = & 1 + \sum_{m=1}^{\infty}\frac{2\cos\left(2\pi m\left(x-y\right)\right)}{(2\pi m)^{2s}}\\
 & = & 1+ \sum_{m=1}^{\infty}\frac{2}{(2\pi m)^{2s}}\left[\cos\left(2\pi mx\right)\cos\left(2\pi my\right)+\sin\left(2\pi mx\right)\sin\left(2\pi my\right)\right],\\
 & = & 1+\frac{(-1)^{s-1}}{(2s)!}B_{2s}(\{x-y\}),
	\label{eq:sobolev-kernel}
 \end{eqnarray*}
where $s=1,2,\ldots$ denotes the order of the Sobolev space and $B_{2s}(\{x-y\})$ is the Bernoulli polynomial of degree $2s$ applied to the fractional part of $x-y$. The corresponding RKHS is the space of functions on $[0,1]$ with absolutely continuous $f,f',\ldots,f^{(s-1)}$ and square integrable $f^{(s)}$ satisfying a periodic boundary condition $f^{(l)}(0)=f^{(l)}(1)$, $l=0,\ldots,s-1$. For more details, see \cite[Chapter 2]{wahba1990spline}.

Bernoulli polynomials admit a simple form for low degrees. In particular,
\begin{eqnarray*}
 B_2(t)&=&t^2-t+\frac{1}{6},\\
 B_4(t)&=&t^4-2t^3+t^2-\frac{1}{30},\\
 B_6(t)&=&t^6-3t^5+\frac{5}{2}t^4-\frac{1}{2}t^2+\frac{1}{42}.
\end{eqnarray*}

Moreover, note that:
\begin{eqnarray*}
\int_{0}^{1}2\cos\left(2\pi mx\right)\sin\left(2\pi m'x\right)dx & = & 0,\\
\int_{0}^{1}2\cos\left(2\pi mx\right)\cos\left(2\pi m'x\right)dx & = & \delta(m-m'),\\
\int_{0}^{1}2\sin\left(2\pi mx\right)\sin\left(2\pi m'x\right)dx & = & \delta(m-m').
\end{eqnarray*}
Thus, the desired Mercer expansion (with respect to the Lebesgue measure) is given by $k(x,y)=\sum_{m\in\mathbb Z} \eta_m e_m(x)e_m(y)$ with eigenfunctions $e_0(x)=1$ and for $m=\{1,2,\ldots\}$, $e_m(x) = \sqrt{2}\cos\left(2\pi mx\right)$, $e_{-m}(x)=\sqrt{2}\sin\left(2\pi mx\right)$
and corresponding eigenvalues $\eta_0=1$, $\eta_{m}=\eta_{-m}=(2\pi m)^{-2s}$.

Now, the adjusted kernel $\tilde k(x,y)$ from \eqref{eq:adjusted} is given by
 \begin{eqnarray*}
  \tilde k(x,y) &=& \sum_{m\in\mathbb Z} \frac{\eta_m}{\eta_m+\gamma}e_m(x)e_m(y)\\
	      {}&=&\frac{1}{1+\gamma}+ \sum_{m=1}^{\infty}\frac{2\cos\left(2\pi m\left(x-y\right)\right)}{1+\gamma(2\pi m)^{2s}}.
 \end{eqnarray*}

\subsubsection{Squared exponential kernel}
\label{section:se}
A Mercer expansion for the squared exponential kernel was proposed in \cite{zhu1997gaussian} and refined in \cite{fasshauer2012stable}. However, this expansion is with respect to a Gaussian measure on $\mathbb R$, i.e., it consists of eigenfunctions which form an orthonormal set in $L^2(\mathbb R,\nu)$ where $\nu=\mathcal N(0,\ell^2I)$. The formalism can therefore be used to estimate Poisson intensity functions with respect to such Gaussian measure. In the classical framework, where the intensity is with respect to a Lebesgue measure, numerical approximations of Mercer expansion, as described in Section \ref{sec:numerical} are needed.
Following the exposition in \cite[section 4.3.1]{rasmussen2006gaussian}
and the relevant errata\footnote{http://www.gaussianprocess.org/gpml/errata.html} we parameterize the kernel as:
\begin{equation}
	k(x,x') = \exp(-\frac{\|x-x'\|^2}{2 \sigma^2})
\end{equation}
	The Mercer expansion with respect to $\nu=\mathcal N(0,\ell^2I)$ then has the eigenvalues
\begin{equation}
	\eta_i = \sqrt{\frac{2a}{A}} B^i,
\end{equation}
and eigenfunctions
\begin{equation}
	e_i(x) = \frac{1}{\sqrt{\sqrt{a/c}~2^i i!}} \exp(-(c-a)x^2) H_i(\sqrt{2c}x)
\end{equation}
where $H_i$ is the $i$-th order (physicist's) Hermite polynomial, $a = \frac{1}{4\sigma^2}$, $b = \frac{1}{2\ell^2}$, 
$c = \sqrt{a^2 + 2ab}$, $A = a + b + c$, and $B = b/A$.
Thus we have the following eigenvalues for $\tilde k$:
\begin{equation}
	\widetilde{\eta_i} = \frac{\eta_i}{a \eta_i + \gamma} = \frac{1}{a + \gamma \sqrt{\frac{A}{2a}} B^{-i}}
\end{equation}
while the eigenfunctions remain the same.

\subsubsection{Brownian Bridge kernel}

This is the kernel on $[0,1]$, given by
$$k(x,y)=\min(x,y)-xy=\sum_{m=1}^\infty \frac{2\sin(\pi m x)\sin(\pi m y)}{\pi^2 m^2},$$
with the eigenvalues and eigenfunctions in the Mercer expansion with respect to Lebesgue measure

\begin{equation}
\eta_{m}=\frac{1}{\pi^{2}m^{2}},\;e_{m}(x)=\sqrt{2}\sin\left(\pi m x\right),\quad m=1,2,\ldots.
\end{equation}

Thus one can form

\begin{eqnarray*}
  \tilde k(x,y) &=& \sum_{m=1}^\infty \frac{\eta_m}{\eta_m+c}e_m(x)e_m(y)\\
	      {}&=&\sum_{m=1}^{\infty}\frac{2\sin(\pi m x)\sin(\pi m y)}{1+c\pi^2m^2}.
 \end{eqnarray*}
 
The functions in the corresponding RKHS are pinned to zero at both ends of the segment.

\subsubsection{Extending the Mercer expansion to multiple dimensions}
\label{section:mercer-tensor}
The extension of any kernel to higher dimensions can be constructed by considering tensor product spaces: $\mathcal{H}_{k_1 \otimes k_2}$ (where
$k_1$ and $k_2$ could potentially be different kernels with different hyperparameters).
If $k_1$ has eigenvalues $\eta_i$ and eigenfunctions $e_i$ and $k_2$ has eigenvalues $\delta_j$ and eigenfunctions $f_j$, 
then the eigenvalues of the product space are then given by the Cartesian product $\eta_i \delta_j, \forall i,j$, and similarly
the eigenfunctions are given by $e_i(x)f_j(y)$. 
Our regularized kernel has the following Mercer expansion:
\begin{equation}
\widetilde{k_1 \otimes k_2}((x,y),(x',y')) = \sum_{ij} \frac{\eta_i \delta_j}{a \eta_i \delta_j + \gamma} e_i(x) e_i(x') f_j(y) f_j(y')
\end{equation}
Notice that $\widetilde{k_1 \otimes k_2}$ is the kernel corresponding to the integral operator $(T_{k_1} \otimes T_{k_2})(a T_{k_1} \otimes T_{k_2} + \gamma I)^{-1}$ which is different than $\tilde k_1 \otimes \tilde k_2$.

Notice that this approach does not lead to a method that scales well in high dimensions, which is further motivation for the approximations
developed below.

\subsection{Numerical approximation when Mercer expansions are not available}
\label{sec:numerical}
We propose an approximation to $\tilde k$ given access only to a kernel $k$ for which we do not have an explicit Mercer expansion with respect to Lebesgue measure. We only assume that we can form Gram matrices corresponding to $k$ and calculate their eigenvectors and eigenvalues. As a side benefit, this representation will also enable scalable computations through Toeplitz / Kronecker algebra \cite{cunningham2008fast,gilboa2013scaling,flaxman2015fast} or primal reduced rank approximations \cite{williams2001using}.

Let us first consider the one-dimensional case and construct a uniform grid ${\bf u}=(u_1,\ldots,u_m)$ on $[0,1]$. Then the integral
kernel operator $T_k$ can be approximated with the (scaled) kernel matrix
$\frac 1 m K_{\bf uu}:\mathbb R^m\to\mathbb R^m$, where $\left[K_{\bf uu}\right]_{ij}=k(u_i,u_j)$, and thus $\tilde K_{\bf uu}$ is
approximately $K_{\bf uu}\left(\frac{a}{m}K_{\bf uu}+\gamma I\right)^{-1}$. Note that for the general case
of multidimensional domains $S$, the kernel matrix would have to be multiplied by $\mbox{vol}(S)$. Without
loss of generality we assume $\mbox{vol}(S) = 1$ below.

We are not primarily interested in evaluations of $\tilde k$ on this grid,
but on the observations $x_1, \ldots, x_N$. Simply adding the observations into the kernel
matrix is not an option however, as it changes the base measure with respect to
which the integral kernel operator is to be computed (Lebesgue measure on
$[0,T]$). Thus, we consider the relationship between the eigendecomposition of $K_{\bf uu}$ and
the eigenvalues and eigenfunctions of the integral kernel operator $T_k$.

\setlength{\abovedisplayskip}{3pt}
\setlength{\belowdisplayskip}{3pt}
Let $\lambda_i^u, {\bf e}_i^u$ be the eigenvalue/eigenvector pairs of the matrix $K_{\bf{uu}}$, i.e., its eigendecomposition
is given by $K_{\bf{uu}}=Q\Lambda Q^\top =\sum_{i=1}^m \lambda_i^u{\bf e}_i^u({\bf e}_i^u)^\top$. Then the estimates of the eigenvalues/eigenfunctions
of the integral operator $T_k$ are given by the Nystr\"om method (see \citet[Section 4.3]{rasmussen2006gaussian} and references therein, especially
\citet{baker1977numerical}):
\begin{equation}
 \hat\eta_i = \frac{1}{m}\lambda_i^u,\qquad \hat e_i(x)=\frac{\sqrt m}{\lambda_i^u}K_{x{\bf u}}{\bf e}_i^u,
\end{equation}
with $K_{x{\bf u}}=\left[k(x,u_1),\ldots,k(x,u_m)\right]$, leading to:
\begin{eqnarray}
 \widehat{\tilde k}(x,x')&=&\sum_{i=1}^m\frac{\hat\eta_i}{a\hat\eta_i+\gamma}\hat e_i(x)\hat e_i(x') \\\nonumber
		     {}&=&\sum_{i=1}^m\frac{\frac{1}{m}\lambda_i^u}{\frac{a}{m}\lambda_i^u+\gamma}\cdot\frac{m}{(\lambda_i^u)^2}K_{x{\bf u}}{\bf e}_i^u({\bf e}_i^u)^\top K_{{\bf u}x'}\\\nonumber
		     {}&=&K_{x{\bf u}}\left\{\sum_{i=1}^m\frac{1}{\left(\frac{a}{m}\lambda_i^u+\gamma\right)\lambda_i^u}{\bf e}_i^u({\bf e}_i^u)^\top \right\}K_{{\bf u}x'}.
\end{eqnarray}

For an estimate of the whole matrix $\tilde K_{\bf xx}$ we thus have
\begin{eqnarray}
\label{eq:tildeKestimator}
 \widehat{\tilde K}_{\bf xx}&=&K_{{\bf xu}}\left\{\sum_{i=1}^m\frac{1}{\left(\frac{a}{m}\lambda_i^u+\gamma\right)\lambda_i^u}{\bf e}_i^u({\bf e}_i^u)^\top \right\}K_{{\bf ux}} {}\nonumber\\
 &=&K_{{\bf xu}}Q\left(\frac{a}{m}\Lambda^2+\gamma \Lambda\right)^{-1}Q^\top K_{{\bf ux}}.
\end{eqnarray}

The above is reminiscent of the Nystr\"om method \citep{williams2001using} proposed for speeding up Gaussian process regression. It has
computational cost $O(m^3+N^2m)$.  A reduced rank representation for Eq.~\eqref{eq:tildeKestimator} is straightforward by considering only the top $p$ eigenvalues/eigenvectors of $K_{\bf{uu}}$.
Furthermore, a primal representation with the features corresponding to kernel $\tilde k$ is readily available and is given by
\begin{equation}
 \tilde \phi(x) = \left(\frac{a}{m}\Lambda^2+\gamma \Lambda\right)^{-1/2}Q^\top K_{{\bf u}x},
\end{equation}
which allows linear computational cost in the number $N$ of observations.

For $D>1$ dimensions, one can exploit Kronecker and Toeplitz algebra approaches. Assuming that the $K_{\bf uu}$ matrix corresponds to a Cartesian product structure of the one-dimensional grids of size $m$, one can write
$K_{\bf uu}= K_1 \otimes K_2 \cdots \otimes K_D$. Thus, the eigenspectrum can be efficiently calculated by eigendecomposing each of the smaller $m\times m$ matrices $K_1, \ldots, K_D$ and then applying standard Kronecker algebra, thereby avoiding ever having
to form the prohibitively large $m^D\times m^D$ matrix $K_{\bf uu}$. For regular grids and stationary kernels, each small matrix will be Toeplitz structured, yielding further efficiency gains \citep{wilson2015thoughts}. The resulting approach thus scales linearly in dimension $D$.

An even simpler alternative to the above
is to sample the points $u_1, \ldots, u_m$ uniformly from the domain $S$ using 
Monte Carlo or Quasi-Monte Carlo (see \cite{oates2016control} for a discussion in the context of RKHS). We found this approach to work well
in practice in high-dimensions ($D = 15$), even when $m$ was fixed, meaning
that the scaling was effectively independent of the dimension $D$.

Using the Sobolev kernel in Sec.~\ref{sec:sobolev},
we compared the exact calculation of $\tilde K_{\bf uu}$ with $s=1$, $a=10$, and $\gamma = .5$ to our approximate calculation. For illustration we compared a coarse grid of size 10 on the unit interval (left) to a finer grid of size 100. The RMSE was 2E-3 for the coarse grid and 1.6E-5 for the fine grid, as shown in Fig.~\ref{fig:sobolev-comparison}.  
In the same figure we compared the exact calculation of $\tilde K_{\bf xx}$ with $s=1$, $a=10$, and $\gamma = .5$ to our Nystr\"om-based approximation, where $x_1, \ldots, x_{400} \sim \mbox{Beta}(.5,.5)$ distribution. The RMSE was 0.98E-3. A low-rank approximation using only the top $5$ eigenvalues gives the RMSE of 1.6E-2.
As Figure \ref{fig:sobolev-comparison}, demonstrates, good approximation is possible with a fairly coarse grid ${\bf u}=\left(u_1,\ldots,u_m\right)$ as well as with a low-rank approximation.
\begin{figure}[ht!]
  \centering
  \includegraphics[width=.5\paperwidth]{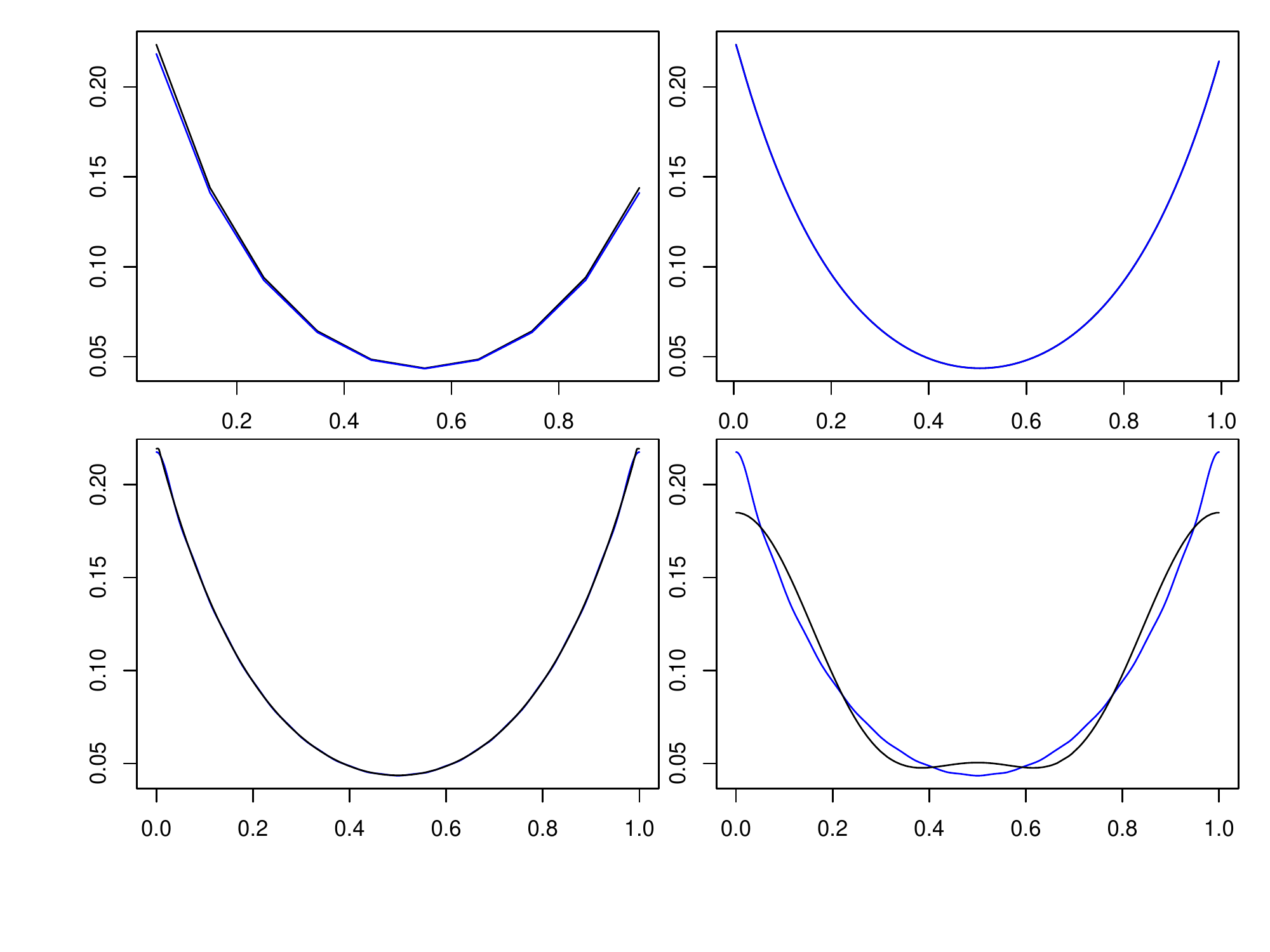}
	\caption{Using the Sobolev kernel in Sec.~\ref{sec:sobolev},
	    we compared the exact calculation of $\tilde K_{\bf uu}$ with $s=1$, $a=10$, and $\gamma = .5$ to our approximate calculation. For illustration we tried a coarse grid of size 10 on the unit interval (top left) to a finer grid of size 100 (top right). The RMSE was 2E-3 for the coarse grid and 1.6E-5 for the fine grid.
    We compare the exact calculation of $\tilde K_{\bf xx}$ with $s=1$, $a=10$, and $\gamma = .5$ to our Nystr\"om-based approximation, where $x_1, \ldots, x_{400} \sim \mbox{Beta}(.5,.5)$ distribution (bottom left). The RMSE was 0.98E-3. A low-rank approximation using only the top $5$ eigenvalues gives the RMSE of 1.6E-2 (bottom right).}
  \label{fig:sobolev-comparison}
\end{figure}

\section{Inference}
The penalized risk can be readily minimized with gradient descent.\footnote{While the objective is not convex,
in practice we observed very fast convergence, and stable results given random starting points.}
Let $\alpha = [\alpha_1, \ldots, \alpha_N]^{\top}$ and
$\tilde K$ be the Gram matrix corresponding to $\tilde k$ such that $\tilde K_{ij} = \tilde k(x_i,x_j)$. Then $[f(x_1), \ldots, f(x_N)]^{\top} = \tilde K \alpha$ and
the gradient of the objective function $J$ from \eqref{eq:Jtildenorm} is given by
\begin{align*}
	\nabla_{\alpha} J & = - \nabla_{\alpha} \sum_i \log(a f^2(x_i)) + \gamma \nabla_{\alpha} \|f \|_{\mathcal{H}_{\tilde k}}^2  \\
	& = - \nabla_{\alpha} \sum_i \log(a (\sum_j \tilde k_{ij} \alpha_j)^2)  + \gamma \nabla_{\alpha} \alpha^{\top} \tilde K \alpha \\
& = - \sum_i \frac{2 a (\sum_j \tilde k_{ij} \alpha_j) \nabla_{\alpha} \sum_j \tilde k_{ij} \alpha_j}{a (\sum_j \tilde k_{ij} \alpha_j)^2} + 2 \gamma \tilde K \alpha \\
&= - \sum_i \frac{ 2 \tilde K_{\cdot i}}{\sum_j \tilde k_{ij} \alpha_j} + 2 \gamma \tilde K \alpha \\
			  & = -2 \sum_i(\tilde K_{\cdot i} ./ (\tilde K \alpha))  + 2 \gamma \tilde K \alpha
\end{align*}
where $ ./ $ denotes element-wise division.
Computing $\tilde K$ requires $\mathcal{O}(N^2)$ time and memory, and each gradient and likelihood computation
requires matrix-vector multiplications which are also $\mathcal{O}(N^2)$. Overall, the running time is $\mathcal{O}(qN^2)$ for $q$ iterations of the gradient descent method, where $q$ is usually very small in practice.

\subsection{Hyperparameter selection}
\label{sec:hyperparameters}
Analogously to the classical problem of bandwidth selection in kernel intensity estimation
(e.g.~\citep{diggle1985kernel,berman1989estimating,brooks1991asymptotic}), some criteria must
be adopted in order to select hyperparameters of the kernel $k$ and also $\gamma$ and $a$. We suggest crossvalidating on the 
negative log-likelihood of the inhomogeneous Poisson process (i.e.~before we introduced the penalty term) from Eq.~\eqref{eq:log-likelihood}.
The difficulty with this approach is that we must deal with the integral $\int_S f^2(u) du$
of the intensity over the domain, which, for our model $f(\cdot)=\sum_{j=1}^N \alpha_j \tilde k(x_j,\cdot)$ is generally intractable.
As an approximation, we suggest either grid or Monte Carlo integration. Recall
that in Section \ref{sec:numerical} we approximated $\tilde k$ using a set of 
locations ${\bf u}=(u_1,\ldots,u_m)$. We can reuse these points to approximate the integral:
\begin{equation}
\int_S f^2(u) du \approx \frac{1}{m} \sum_i f^2(u_i).
\end{equation}
As $f(u_i)={\tilde K}_{u_i{\bf x}}\alpha$, this approximation is given by $\frac{1}{m}\alpha^\top {\tilde K}_{{\bf xu}}{\tilde K}_{{\bf ux}}\alpha$.

\section{Na\"ive RKHS model}
In this section, we compare the proposed approach, which uses the representer theorem in the transformed kernel $\tilde k$, to the na\"ive one, where a solution to Eq.~\eqref{eq:objective} of the form $f(\cdot) = \sum_{j=1}^N \alpha_j k(x_j, \cdot)$ is sought even though the representer theorem in $k$ need not hold. Despite being theoretically suboptimal, this is a natural model to consider, and it might
perform well in practice. 

The corresponding optimization problem is:
\begin{equation}
	\min_{f\in \mathcal{H}_k} \left\{-\sum_{i=1}^N \log(af^2(x_i)) + a\int_{S} f^2(x) dx + \gamma \|f \|_{\mathcal{H}_k}^2\right\}
 \nonumber
\end{equation}

While the first and the last term are straightforward to calculate for any $f(\cdot) = \sum_j \alpha_j k(x_j, \cdot)$, $\int_{S} f^2(x) dx$ needs to be estimated. As in the previous section, we consider a uniform grid or set of sampled points
${\bf u}=(u_1,\ldots,u_m)$ covering the domain and use approximation 
\begin{equation}
    \int_{S} f^2(u) du \approx \frac{1}{m} \sum_i f^2(u_i) =  \frac{1}{n} \alpha^{\top} K_{\bf{xu}} K_{\bf{ux}} \alpha.
\end{equation}
The optimization problem thus reads:
\begin{equation}
 \min_{\alpha\in\mathbb R^N} \left\{-\sum_{i=1}^N \log(a(\alpha^{\top} K_{{\bf x}x_i})^2) +
	\alpha^\top \left(\frac{a}{n}K_{\bf{xu}} K_{\bf{ux}}+\gamma K_{\bf{xx}}\right) \alpha \right\}.
 \label{eq:wrong_objective}
\end{equation}
As above, the gradient of this objective with respect to $\alpha$ can be readily calculated, and optimized with gradient descent.

\section{Experiments}
We use cross-validation to choose the hyperparameters in our methods: $a$, the fixed intensity, 
$\gamma$, the roughness penalty, and the length-scale of the kernel $k$, minimizing
the negative log-likelihood as described in Section \ref{sec:hyperparameters}.

To calculate RMSE, we either make predictions at a grid of locations
and calculate RMSE compared to the true intensity at that grid or for the high-dimensional
synthetic example we pick a new uniform sample of locations over the domain and calculate
the RMSE at these locations. 
We used limited memory BFGS in all experiments involving optimization, and found that it converged very quickly
and was not sensitive to initial values. Code for our experiments is available at \url{https://github.com/BigBayes/kernelpoisson}.

\subsection{1-d synthetic Example}
We generated a synthetic intensity using the Mercer expansion of a SE kernel with lengthscale $0.5$, producing
a random linear combination of 64 basis functions, weighted with iid draws $\alpha \sim \mathcal{N}(0,1)$.
In Fig.~\ref{fig:squared-exp} we compare ground truth to estimates made with: our RKHS method with SE kernel, the na\"ive RKHS approach with SE kernel,
and classical kernel intensity estimation with bandwidth selected by crossvalidation.
The results are typical of what we observed on 1D and 2D examples: given similar kernel choices, each method performed similarly,
and numerically there was not a significant difference in terms of the RMSE compared to the true underlying intensity.
\begin{figure}[h]
	\centering
	\captionbox{\small A synthetic dataset, comparing our RKHS method, the na\"ive model, and kernel smoothing to a synthetic intensity ``true''. The rug plot at bottom gives the location of points in
	the realized point pattern. The RMSE for each method was similar.\label{fig:squared-exp}}{%
	\hspace{.15in}\includegraphics[width=.37\paperwidth]{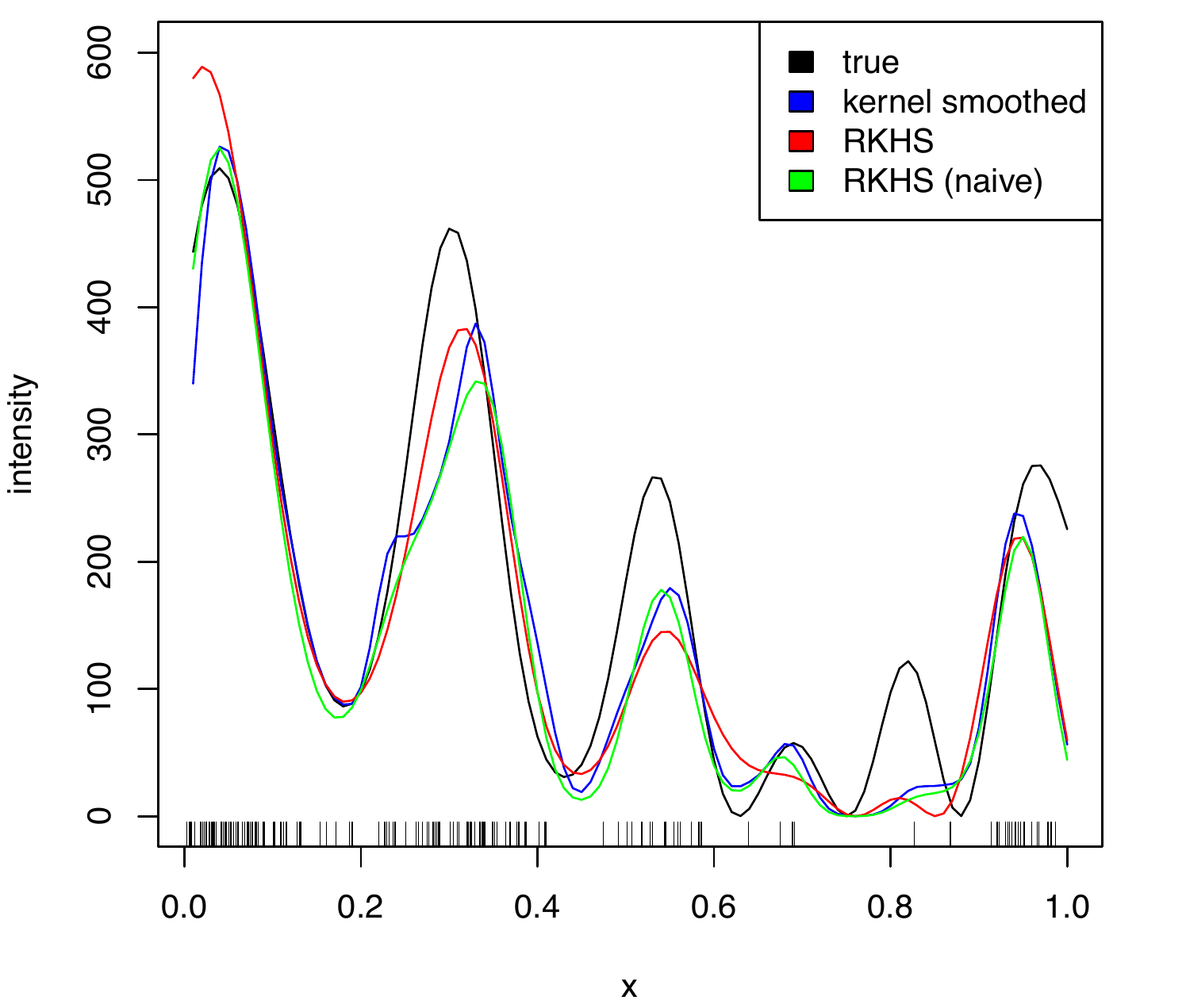}
    }
  
\end{figure}

\begin{figure}[ht!]
  \begin{subfigure}[t]{2.5in} \centering \includegraphics[width=.28\paperwidth]{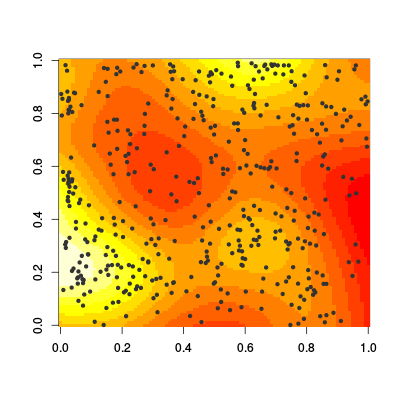}
      \caption{\small KIE with edge correction}\label{fig:kde}
  \end{subfigure}%
  \begin{subfigure}[t]{2.5in} \centering \includegraphics[width=.28\paperwidth]{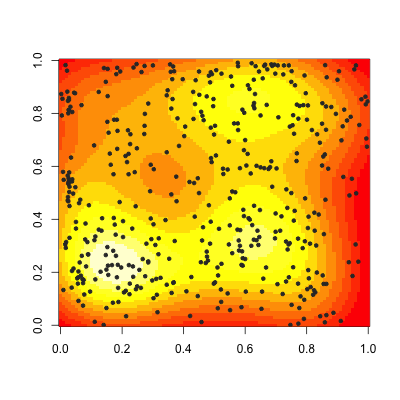}
      \caption{\small KIE without edge correction}\label{fig:kde}
  \end{subfigure}

    \begin{subfigure}[t]{2.5in} \centering \includegraphics[width=.28\paperwidth]{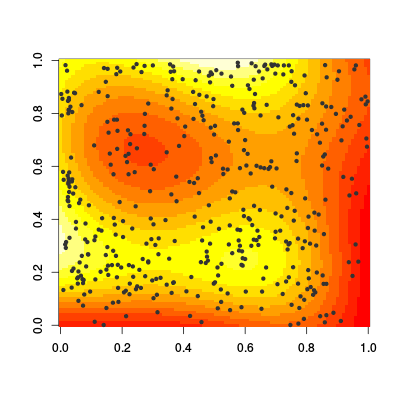}
        \caption{\small Our RKHS method with $\tilde k$}\label{fig:rkhs}
    \end{subfigure}%
    \begin{subfigure}[t]{2.5in} \centering \includegraphics[width=.28\paperwidth]{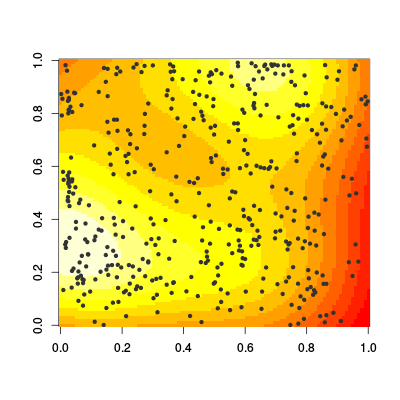}
        \caption{\small Na\"ive RKHS method}\label{fig:naive}
    \end{subfigure}%
    \caption{\small Location of white oak trees in Lansing, Michigan, smoothed with various approaches. Squared exponential
    kernels are used throughout. Edge correction makes a noticeable difference for classical kernel
    intensity estimation. Comparing (a) and (c) it is clear that our method is automatically performing edge correction.}
  \label{fig:whiteoak}
\end{figure}

	\begin{table}[h]
	\caption{\small Tree Point Patterns from R Package {\tt spatstat}}
	\label{table:environmental}
	\footnotesize
	\centering
\begin{tabular}{|c|c|c|c|}
	\hline
	Dataset & 	Kernel intensity estimation & Na\"ive approach  & Our approach with $\tilde k$ \\\hline		
	Lansing: Black oak (n = 135) & 234 & 233 & 227 \\ \hline
	  Hickory (n = 703) & 1763 & 1746 & 1757 \\ \hline
	  Maple (n = 514) & 1239 & 1228 & 1233 \\ \hline
	  Misc (n = 105) & 179 & 177 & 172 \\ \hline\hline
	  New Zealand (n = 86) & 119 & 119 & 119 \\ \hline
	  Red oak (n = 346) & 726 & 726 & {\bf 739} \\ \hline
	  Redwoods in California (n = 62) & 79 & 84 & 77 \\ \hline
	  Spruces in Saxonia (n = 134) & 215 & 212 & 212 \\ \hline
	  Swedish pines (n = 71) & 91 & 89 & 90 \\ \hline
	  Waka national park (n = 504) & 1142 & 1141 & 1144 \\ \hline
	  White oak (n = 448) & 992 & 992 & 996 \\ \hline

\end{tabular}
\end{table}

\subsection{Environmental datasets} 

Next we demonstrate our method on a collection of two-dimensional environmental datasets giving
the locations of trees. Intensity estimation is a standard first step in both exploratory analysis and modelling of these types
of datasets, which were obtained from the R package {\tt spatstat}. We calculated the intensity using
various approaches:
our proposed RKHS method with $\tilde k$ with a squared exponential kernel, the na\"ive RKHS method with squared exponential kernel,
and classical kernel intensity estimation (KIE) with edge correction. Each method used a squared exponential kernel.
We report average held-out cross-validated likelihoods in Table \ref{table:environmental}. With the exception of our method performing better on the Red oak dataset,
each method had comparable performance. It is interesting to note, however, that our method does not require any 
explicit edge correction\footnote{Because no points are observed outside the window $S$, intensity estimates near the edge are biased downwards \citep{jones1993simple}.}, because we are optimizing a likelihood which explicitly takes into account the window.
A plot of the resulting intensity surfaces for each method
and the effect of edge correction are shown in Fig.~\ref{fig:whiteoak} for the Black oak dataset.

\subsection{High dimensional synthetic examples} 

We generated random intensity surfaces in the unit hypercube for dimensions $D
= 2, \ldots, 15$.  The intensity was given by a constant multiplied by the
square of the sum of 20 multivariate Gaussian pdfs with random means and
covariances. The constant was automatically adjusted so that the number of
points in the realizations would be held close to constant, in the range 190-210.  We
expected this to be a relatively simple synthetic example for kernel intensity
estimation with a Gaussian kernel in low dimensions, but not in high
dimensions.  From each random intensity, we generated two random realizations,
and trained our model using 2-fold crossvalidation with these two datasets. We
predicted the intensity at a randomly chosen set of points and calculated the
mean squared error as compared to the true intensity. For each dimension we
repeated this process 100 times comparing kernel intensity estimation, the
na\"ive approach, and our approach with $\tilde k$.  

Using the same procedure, but a sum of 20 multivariate Student-t distributions
with 5 degrees of freedom, random means and covariances, and number of points
in the realizations ranging from 10 to 1000, we generated 500 random surfaces,
with dimension $D = 2, \ldots, 25$. We expected this to be a difficult synthetic
example for all of the methods due to a potential for model misspecification,
as we continue to use squared exponential kernels, but the intensity surface
is non-Gaussian.

As shown in Fig.~\ref{fig:rkhs-vs-kde-vs-naive}(a) once we reach dimension 9 and above, our
RKHS method with $\tilde k$ begins to outperform kernel intensity estimation,
where performance is measured as the fraction of times that the MSE is smaller across
100 random datasets for each $D$.  Our method
also significantly outperforms the na\"ive RKHS method as shown in
Fig.~\ref{fig:rkhs-vs-kde-vs-naive}(b).  For $D = 15$ the difference
between the two RKHS methods is not significant. This could be due to the
fact that the number of points in the point pattern remains fixed, so the
problem becomes very hard in high dimensions.\footnote{Note that our
experiments are sensitive to the overall number of points in the synthetic
point patterns; since kernel density estimation is a consistent method
\citep{wied2012consistency}, we should expect kernel intensity estimation to
become more accurate as the number of points grows. However, consistency in the
sense of classical statistics is not necessarily useful in point processes,
because our observations are not iid; the number of points that we observe is
in fact part of the dataset since it reflects the underlying intensity.}
As shown in the Fig.~\ref{fig:rkhs-vs-kde-vs-naive}(c), kernel
intensity estimation is almost always better than the na\"ive RKHS approach,
although the difference is not significant in high dimensions.

For the Student-t experiment, as shown in Fig.~\ref{fig:rkhs-vs-kde-vs-naive}(d)-(f), 
our RKHS method always outperforms kernel intensity estimation and is better than the na\"ive
method in dimensions below $D = 20$. To assess the amount of improvement, rather than just
its statistical significance, we compared the percent improvement in terms of MSE gained by
our method versus the competitors, just focusing on $D = 10$ in Fig.~\ref{fig:pct-improvement-studentt}.
On this metric (intuitively, ``how much do you expect to improve on average'') our
method shows reasonably stable results as compared to KIE, while the performance of the
na\"ive method is revealed to be very variable. Indeed, the standard deviation across the random
surfaces for $D=10$ of the MSE
was 56 for both our method and KDE but 166 for the na\"ive method, perhaps due to overfitting.

\subsection{Computational complexity} 
Using the synthetic data experimental setup, we evaluated the time complexity of our method
with respect to dimensionality $d$, number of points in the point pattern dataset $n$,
and number of points $s$ used to estimate $\tilde k$ (Fig.~\ref{fig:scaling-S}), confirming our theoretical
analysis.

The effect
of the dimensionality $d$ was negligible in practice, because the main
calculations rely only on an $n \times n$ Gram matrix whose calculation is relatively fast even for high dimensions.
Our method's time complexity scales as $\mathcal{O}(n^2)$ as shown in Fig.~\ref{fig:scaling-N} (but as discussed in Section \ref{sec:numerical},
a primal representation is available which would give linear scaling.) where we used $s = 200$ sample points to estimate $\tilde k$.
While a small $s$ worked well in practice, we investigated much larger values of $s$. As shown in Fig.~\ref{fig:scaling-S}
the time complexity scaled as $\mathcal{O}(s^2)$ where the number of points was fixed to be 150; note that we fixed the rank of the eigendecomposition to be 20.
\begin{figure}[h]
  \centering
  \includegraphics[width=.4\paperwidth]{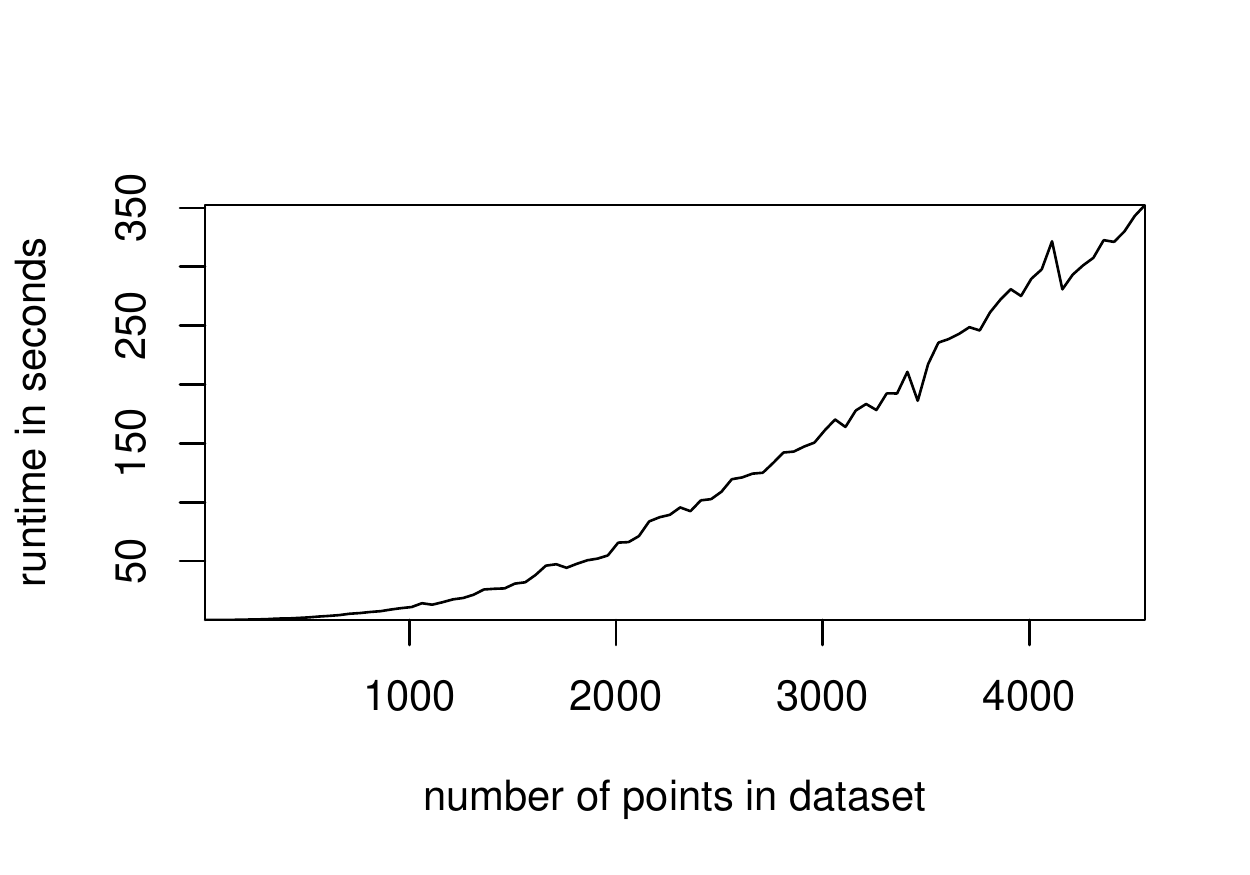}
  \caption{Run-time of our method versus number of points in the point pattern dataset.}
  \label{fig:scaling-N}
\end{figure}
\begin{figure}[h]
  \centering
  \includegraphics[width=.4\paperwidth]{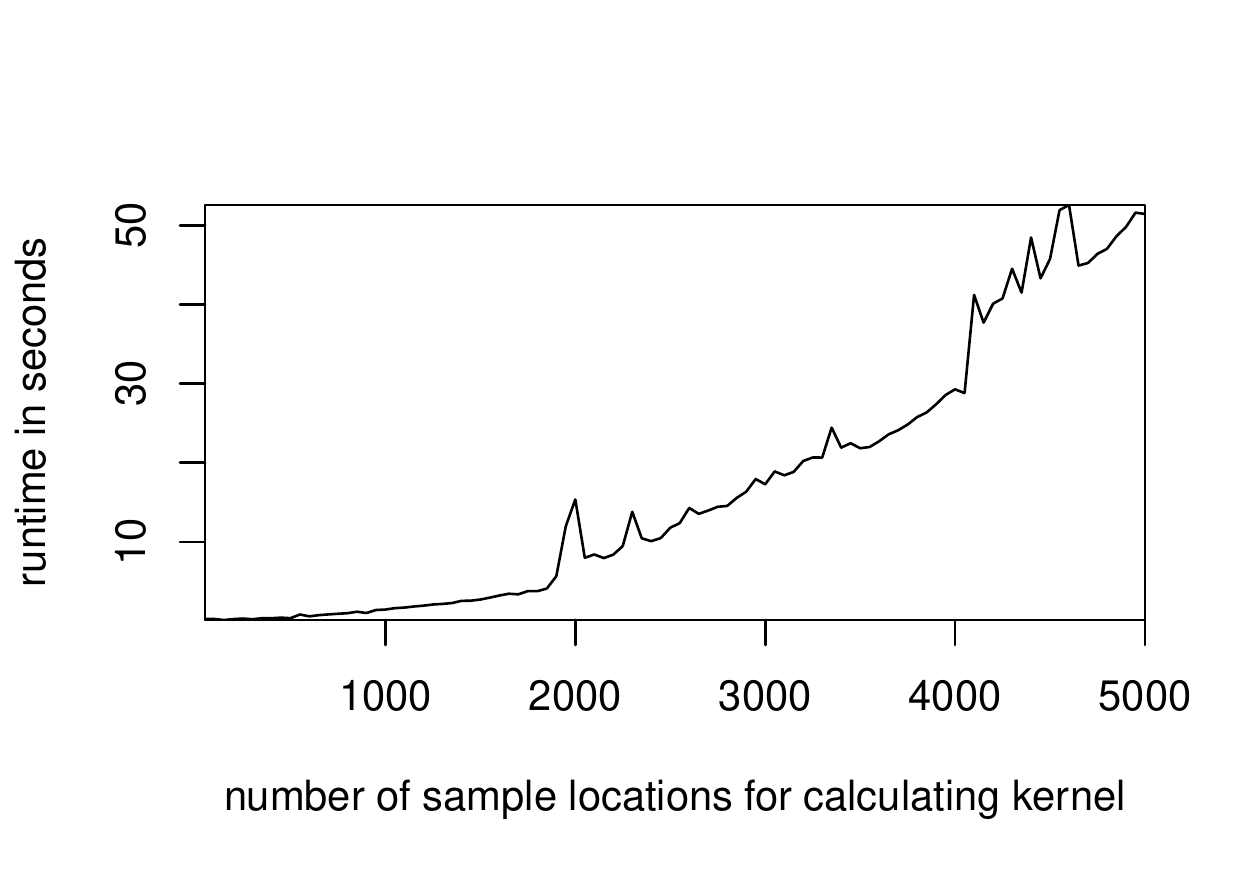}
  \caption{Run-time of our method versus number of sample points used to calculate $\tilde k$.}
  \label{fig:scaling-S}
\end{figure}
\newpage

\begin{figure}[h!]
	\includegraphics[width=.3\paperwidth]{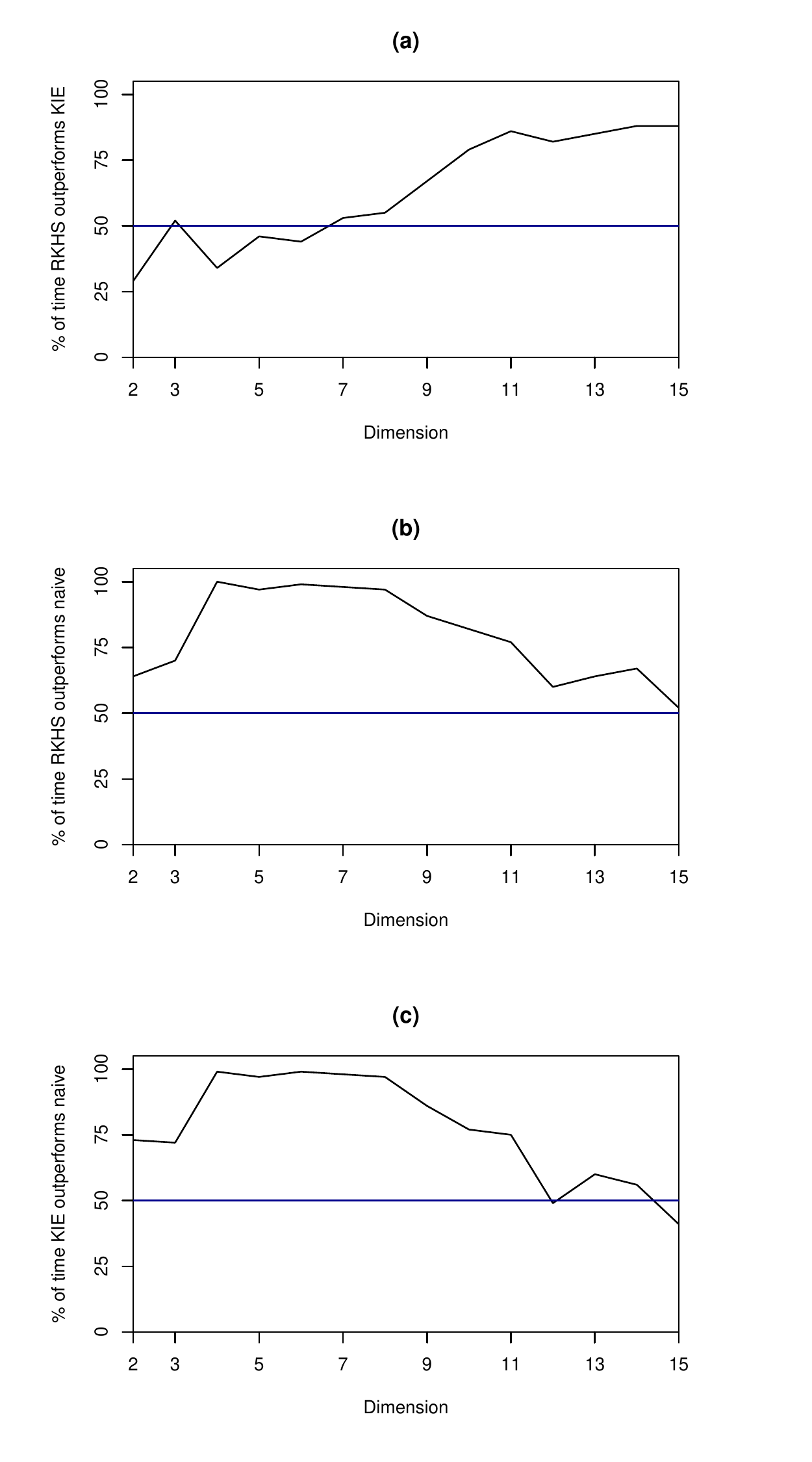}%
	\includegraphics[width=.3\paperwidth]{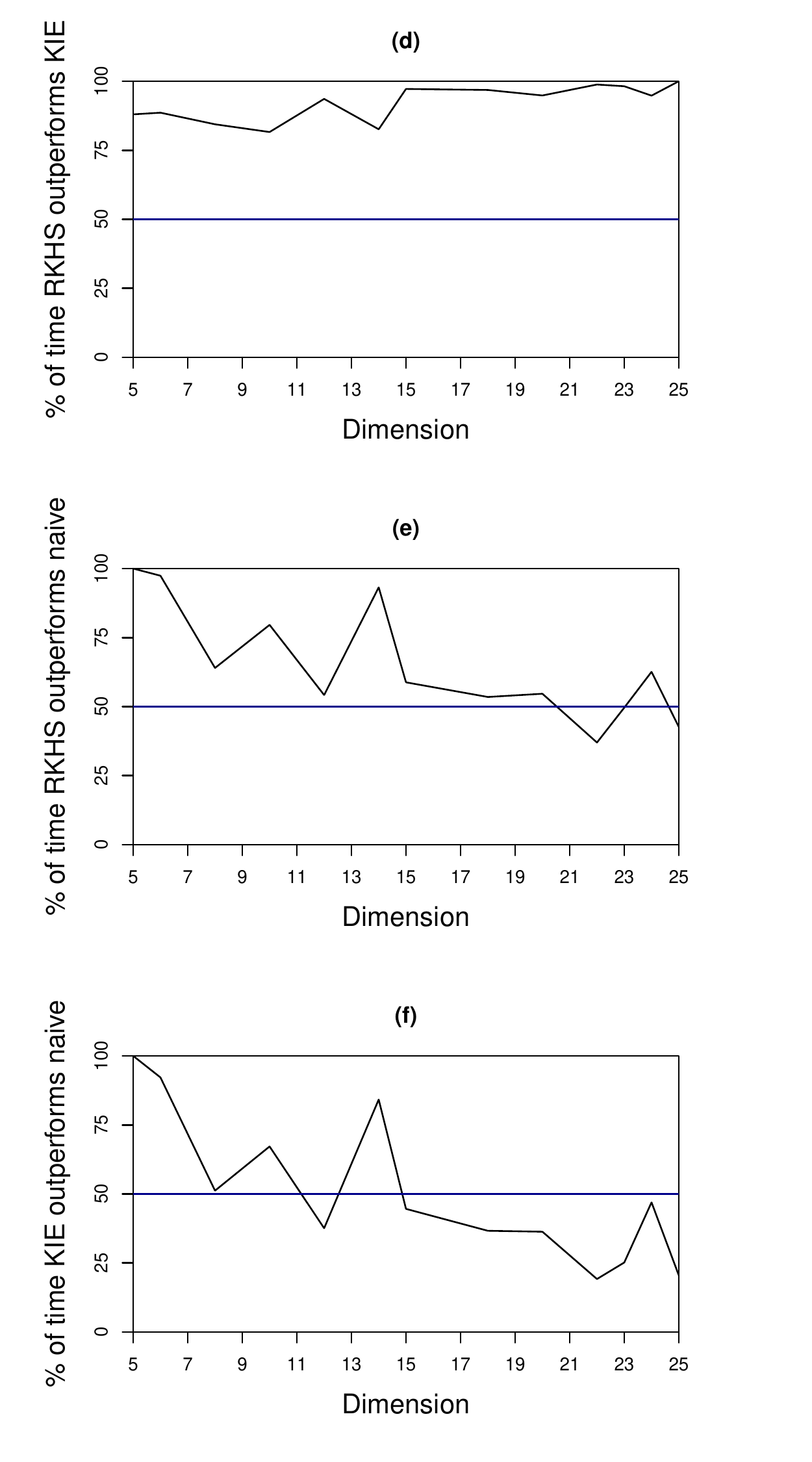}
	\caption{Three methods were compared: our RKHS method, the na\"ive RKHS method, and kernel intensity estimation,
		based on 100 random surfaces for each dimension $D$ in two experimental setups.
		In (a)-(c), the intensity surface was the squared sum of skewed multivariate Gaussians. In (d)-(f) the surface
		was a mixture of skewed multivariate Student-t distributions, with 5 degrees of freedom.  
		  In (a) and (d): comparison of our RKHS method versus KIE.
		In (b) and (e): our RKHS method versus the na\"ive RKHS method.
		In (c) and (f): comparison of KIE and the na\"ive RKHS approach.
		We used squared exponential kernels for all methods.
		In the Gaussian case (a)-(c), our method significantly outperforms kernel intensity estimation as the dimension increases,
		and outperforms the na\"ive method throughout.  Kernel intensity estimation almost always outperforms the na\"ive approach.  
		In the Student-t case (d)-(f), our method always outperforms kernel intensity estimation, and outperforms the na\"ive approach
		until very high dimensions. Neither kernel intensity estimation nor the na\"ive approach are consistently better than each
		other.}
	  \label{fig:rkhs-vs-kde-vs-naive}
\end{figure}

\begin{figure}[ht]
	\includegraphics[width=.5\paperwidth]{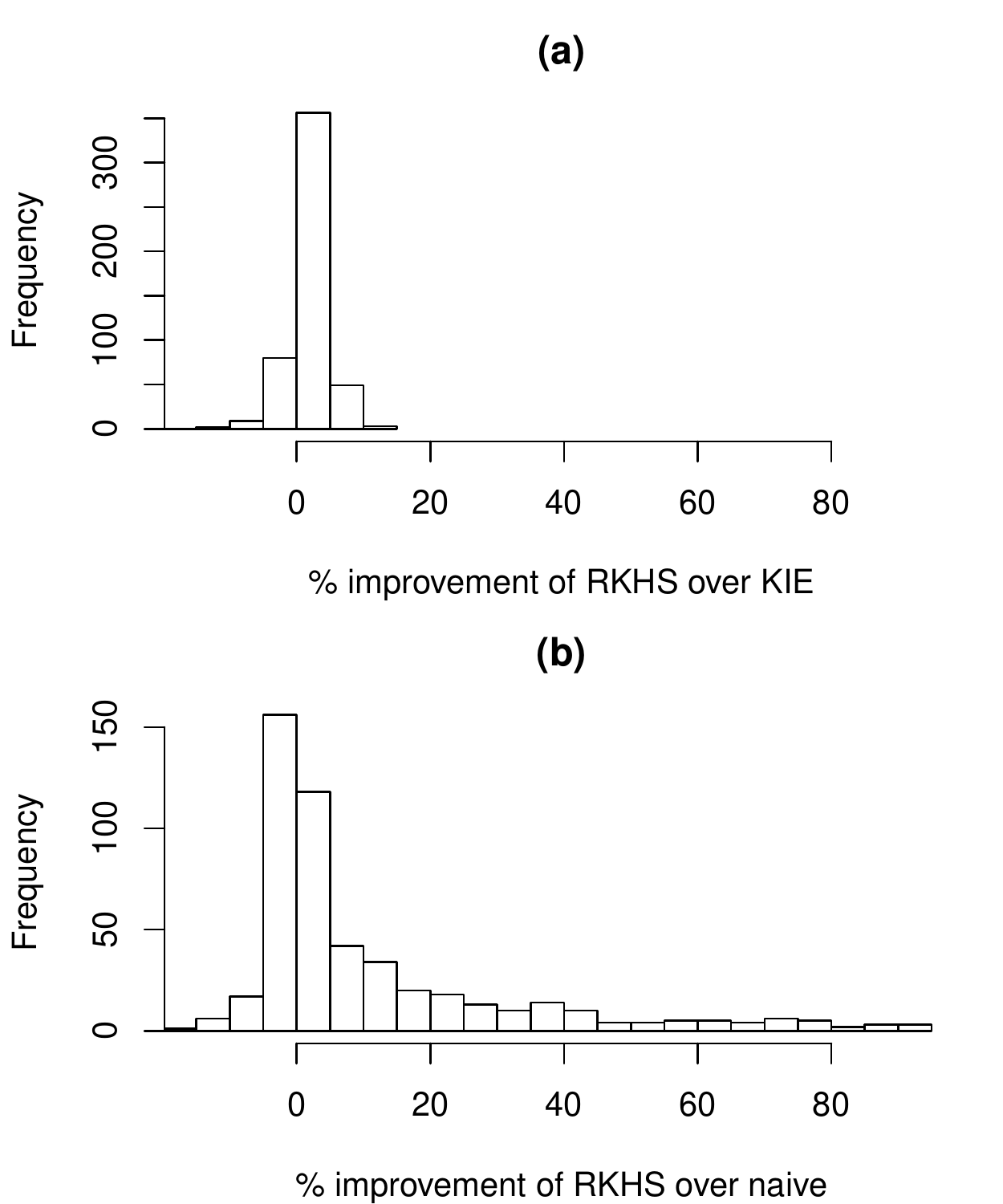}
	\caption{To understand the practical (as opposed to statistical) significance of the results in Fig.~\ref{fig:rkhs-vs-kde-vs-naive}(d)-(f), where we generated random surfaces by squaring the sums of multivariate Student-t distributions, we considered dimension $D = 10$, in which our RKHS method was better than both the na\"ive method and kernel intensity estimation (KIE) but there was not a significant difference between KIE and the na\"ive method, and calculated the percent improvement in terms of MSE comparing our RKHS method to the na\"ive method (left) and our RKHS method to and kernel intensity estimation (KIE) (right). The improvement of our method over KIE is apparent, albeit perhaps only modest in this example. Meanwhile, our method is sometimes quite a bit better than the na\"ive method, which is often very inaccurate.}
	\label{fig:pct-improvement-studentt}
\end{figure}

\begin{figure}[ht!]
\centering
\includegraphics[width=.6\paperwidth]{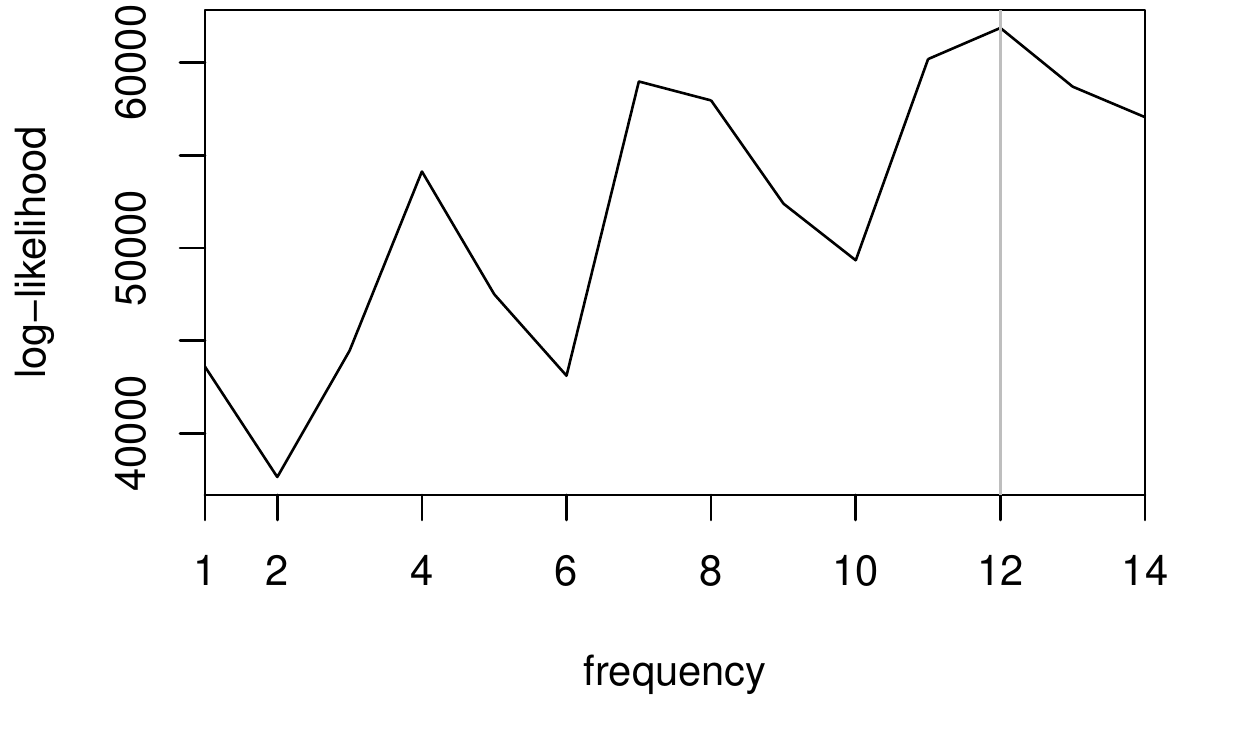}%
\caption{\small Log-likelihood for various frequencies of a periodic spatiotemporal kernel in a dataset of 18,441 geocoded, date-stamped theft events from Chicago, using our RKHS model. The dataset is for 12 weeks starting January 1, 2004, and the maximum log-likelihood is attained when the frequency is 12, meaning that there is a weekly cycle in the data. Results using the na\"ive model were less sensible,
	with a maximum at period 1 (indicating no periodicity), with periods 5 and 2 (corresponding to a 16.8 day cycle and a 42 day cycle) also having high likelihoods. }
\label{fig:crime}
\end{figure}

\subsection{Spatiotemporal point pattern of crimes} 
To demonstrate the ability to use domain specific kernels and
learn interpretable hyperparameters, we used 12 weeks (84 days) of geocoded, date-stamped reports of theft
obtained from Chicago's data portal (data.cityofchicago.org) starting January 1, 2004, a relatively large spatiotemporal point pattern consisting of 18,441 events. We used the following kernel: $\exp(-.5 s^2/\lambda_s^2) (\exp(-2\sin^2(t \pi p)) + 1) (\exp(-.5 t^2 / \lambda_t^2))$
which is the product of a separable squared exponential space and decaying periodic time kernel (with frequency $p$  in a time domain normalized to range from $0$ to $1$) plus a separable squared exponential space and time kernel. After finding reasonable values for the lengthscales and other hyperparameters of $\tilde k$ through exploratory data analysis, we used 2-fold cross-validation and calculated average test log-likelihoods for the number of total cycles $p$ in the 84 weeks $= 1, 2, \ldots, 14$ or equivalently a period of length 12 weeks (meaning no cycle), 6 weeks, ..., 6 days. These log-likelihoods are shown in Fig.~\ref{fig:crime}; we found that the most likely frequency is 12, or equivalently a period lasting 1 week. This makes sense given known day-of-week effects on crime.

\section{Conclusion}
We presented a novel approach to inhomogeneous Poisson process intensity
estimation using a representer theorem formulation in an appropriately
transformed RKHS, providing a scalable approach giving strong performance on
synthetic and real-world datasets. Our approach outperformed the classical
baseline of kernel intensity estimation and a na\"ive approach for which the
representer theorem guarantees did not hold. In future work, we will consider marked
Poisson processes and other more complex point process models, as well as
Bayesian extensions akin to Cox process modeling. 

\bibliographystyle{imsart-nameyear}
\bibliography{biblio}
		
\end{document}